\newtheorem{lemma}{Lemma}
\newtheorem{definition}{Definition}
\newtheorem{proposition}{Proposition}
\newtheorem{corollary}{Corollary}
\title{SFP: Spurious Feature-targeted Pruning for Out-of-Distribution Generalization}
\author{%
Yingchun Wang*$^\dag$, Jingcai Guo*$^\dag$, Yi Liu$^\dag$, Song Guo$^\dag$, \\
\textbf{Weizhan Zhang$^\ddag$, Xiangyong Cao$^\ddag$, Qinghua Zheng$^\ddag$} \\
  \textsuperscript{$^\dag$}Department of Computing, The Hong Kong Polytechnic University, Hong Kong SAR\\
  \textsuperscript{$^\ddag$}Department of Computer Science and Technology, Xi’an Jiaotong University, China\\
  %\texttt{hippo@cs.cranberry-lemon.edu} \\
  % examples of more authors
  % \And
  % Coauthor \\
  % Affiliation \\
  % Address \\
  % \texttt{email} \\
  % \AND
  % Coauthor \\
  % Affiliation \\
  % Address \\
  % \texttt{email} \\
  % \And
  % Coauthor \\
  % Affiliation \\
  % Address \\
  % \texttt{email} \\
  % \And
  % Coauthor \\
  % Affiliation \\
  % Address \\
  % \texttt{email} \\
}
\begin{document}

\maketitle

\begin{abstract}
Model substructure learning aims to find an invariant network substructure that can have better out-of-distribution (OOD) generalization than the original full structure. 
Existing works usually search the invariant substructure using modular risk minimization (MRM) with fully exposed out-domain data, which may bring about two drawbacks: 1) \textit{Unfairness}, due to the dependence of the full exposure of out-domain data; and 2) \textit{Sub-optimal OOD generalization}, due to the equally feature-untargeted pruning on the whole data distribution.
Based on the idea that in-distribution (ID) data with spurious features may have a lower experience risk, in this paper, we propose a novel \textbf{\underline{S}}purious \textbf{\underline{F}}eature-targeted model \textbf{\underline{P}}runing framework, dubbed \textbf{SFP}, to automatically explore invariant substructures without referring to the above drawbacks. 
Specifically, SFP identifies spurious features within ID instances during training using our theoretically verified task loss, upon which, SFP attenuates the corresponding feature projections in model space to achieve the so-called spurious feature-targeted pruning. 
This is typically done by removing network branches with strong dependencies on identified spurious features, thus SFP can push the model learning toward invariant features and pull that out of spurious features and devise optimal OOD generalization. 
Moreover, we also conduct detailed theoretical analysis to provide the rationality guarantee and a proof framework for OOD structures via model sparsity, and for the first time, reveal how a highly biased data distribution affects the model's OOD generalization. 
Experiments on various OOD datasets show that SFP can significantly outperform both structure-based and non-structure-based OOD generalization SOTAs, with accuracy improvement up to 4.72\% and 23.35\%, respectively\footnote{Equal contribution ($*$): Yingchun Wang and Jingcai Guo contributed equally to this work.}.
\end{abstract}
\section{Introduction}
%引入SDE形式来描述扩散模型的本质好处是“将理论分析和代码实现分离开来
%\textcolor{red}{yingchun: how the model substructure are corresponded to the feature representation in linear form？}
%People's The reliance on intelligent applications is increasing day by day
%The reliance on intelligent applications has kept increasing recently since 
Deep machine learning has demonstrated its excellent capabilities in various fields such as computer vision, natural language processing, recommender systems, etc.~\citep{jordan2015machine}. 
%
%While when faced with the fickle data distribution in the real world, most intelligent applications, born under the ideal assumption that the data are identical and independently distributed, begin to expose their vulnerability. Therefore, to take advantage of machine learning more reliably, it is urgent to improve the out-of-distribution generalization ability of these intelligent models.
However, when faced with real-world fickle data distributions, most applications, are born to be vulnerable due to the ideal assumption that the data are identical and independently distributed. 
%%%%%%%%%%%%%%%%%%%%%%%%%add%%%%%%%%%%%%%%%%%%%%%%%%%%%%
In this work, we focus on spurious correlations in out-of-distribution (OOD) settings that are prone to change in real-world data distributions. 
For example, to recognize camels, the learned models usually tend to make use of some spurious correlations, i.e., camels are commonly associated with a yellow desert background. However, the generalization can be poor if the camels are photographed in green oases or even less background \citep{nui}.
Therefore, it is crucial to improve the OOD generalization ability of these models. 
%Formally, OOD generalization aims to adapt the learned knowledge to unknown test data distribution. 
%solve the challenges in settings where the test distribution is unknown and different from training. 
In recent years, a wealth of literature has been generated in this field. 
%OOD generalization. 
%
For example, \citep{survey} summarizes several popular branches under the supervised setting, including domain generalization, causal invariant learning, and stable learning. 
Specifically, domain generalization (DG)~\citep{dg1,dg2} combines multiple source domains to learn models that generalize well on unseen target domains. 
Differently, causal learning and invariant learning~\citep{cl1,iv1,iv2} explore the invariance of data predictions in a more principled way for causal inference. 
In another way, stable learning \citep{st1,st2} aims to establish a consensus between causal inference and machine learning to improve the robustness and credibility of models.

Most recently, some works address the OOD problem from the perspective of model structure, which is also our focus. Compared with the above-mentioned methods, the model-structure-based approach has the extra advantage that it is general and can be embedded in most SOTAs to further improve their performance. 
For example, \citep{ar1} provides sufficient and intuitive motivation for this branch of OOD generalization, and suggests that over-parameterized models could degrade OOD performance through data memorization and overfitting. Differently, \citep{can} claims that even highly spurious feature-related full networks can contain particular substructures that may achieve better OOD generalization, and proposes a module detection technique, with the guidance of fully exposed out-domain data, to identify this functional lottery. 

However, despite the progress made, the model-structure-based methods are mostly empirically constructed and lack theoretical explanations and proof of effectiveness. 
One may note that previous methods usually apply existing non-OOD-specific techniques such as network architecture search and module detection to find OOD lottery tickets, which may degrade the effectiveness of these techniques in OOD setting. 
For example, \citep{can} indicates that the sparsity of the weights is not exactly the sparsity of the model about spurious features in their method. 
Worse still, most approaches rely on the guidance of fully exposed OOD data to find the target substructure, which is highly unlikely to be feasible in real-world applications.
%And most of them rely on the guidance of fully exposed OOD data or the assumption of known causal structure, which is highly unlikely to be feasible in practice, inducing additional post-training cost.

To address the above issues, we propose a novel spurious feature-targeted model pruning framework, dubbed SFP, to automatically explore the optimal invariant model substructure with better OOD generalization. 
Specifically, SFP can identify spurious features (correlations) within ID instances with high probability during training without the full exposure of out-domain data, thus preventing the model from fitting such identified features and executing model sparsity, particularly for spurious features. 
%Notably, the whole learning pipeline can be trained all at once in a one-shot manner.
%%%%%%%%****************************************
%Our method automatically recognizes ID data with spurious features with high probability during training, thus releasing the dependence on artificial preliminaries.
%Immediately after, SFP prevents the model from fitting spurious features as much as possible, and executes model sparse particularly for ID feature sparse. The whole learning pipeline is in one-shot training.
%
To do it, we build two subspaces spanning from highly biased training data to provide the coordinate basis for spurious and invariant features, respectively, and further build a model space as the reference for feature projection, i.e., from learned features to the model.
In practice, our idea is partially motivated by the finding that the nonlinear activations of CNNs can change implicitly into a ``coupled'' manner with linear architectures (i.e., linear classifiers)~\citep{du2018}, and such balanced invariant has been verified feasible under OOD scenarios~\citep{finetune}. 
Based on that, we prove that input data instances with smaller prediction losses can contain more significant spurious features during training, which is then used as the rationale for spurious feature identification. 
%and this balance invariant have been verified feasible under OOD scenarios ~\citep{finetune}.
%\replaced[id=yingchun]{Based on that, we}{We} prove that inputs with smaller prediction loss contain more spurious features during training, as the rationale for identifying spurious features. 
%1
%Our work \textcolor{blue}{\added[id=yingchun]{is based on the finding that, for convolution neural networks (CNN), nonlinear activations can change implicitly in a "coupled" manner with linear architectures (i.e., linear classifiers)~\citep{du2018}" }}, and this balance invariant have been verified feasible under OOD scenarios ~\citep{finetune}.
%\replaced[id=yingchun]{Based on that, we}{We} prove that inputs with smaller prediction loss contain more spurious features during training, as the rationale for identifying spurious features. 
%2
By weakening the feature projections only for those identified features into the model space, we can increase the resistance of the model space basis to learn towards directions of the subspace spanning from spurious features. 
As a result, we can progressively adjust the component rank in the projection matrix to the ordering of invariant feature correlation via singular value decomposition (SVD), i.e., the directions of the model space corresponding to the lowest singular values are sparsed out. 
%Through weakening the feature projections only from 
%those identified ones into model space, we increase the resistance of the model space basis to learn towards directions of the subspace spanning from spurious features. 
%3
%Thus, SFP progressively adjusts the component rank in projection matrix to the ordering of invariant feature correlation via singular value decomposition(SVD). The directions of the model feature space corresponding to the lowest singular values are sparsed out. 
%4
%Moreover, we also provide both theoretical analysis and experimental evidence to prove the effectiveness of SFP. 
%
In summary, our contributions are three-fold:
%can be listed as follows:
\begin{itemize}
    %\item We analytically prove the rationality and effectiveness of pruning better OOD substructures through model sparsity and provide a proof framework, making up for the lack of theoretical guidance in previous work in this field.
    \item We provide a framework for proving the rationality and efficacy of pruning better OOD substructures through model sparsity, thereby compensating for the absence of theoretical guidance in previous work in this field.
    \item We propose a novel spurious feature-targeted model pruning method to automatically find OOD substructures during training, totally without prior causal assumptions nor the full exposure of additional out-domain data.
    \item To the best of our knowledge, we are the first to theoretically reveal the correspondence between the biased data features and the model substructures for better OOD generalization.
    %Our method can successfully sparse spurious feature projection and achieve ${93.42}\%$ test accuracy on full-colored-mnist dataset, only $0.62\%$ behind the upper-bond performance.
\end{itemize}

\section{Related Work}
% \subsection{Out-of-distribution}
\textbf{Out-of-Distribution Generalization.}
\label{ood related work}
In recent years, a number of efforts have been made to address the distribution shift between train-test data, namely, OOD generalization. 
Existing literature can be roughly divided into two categories including non-structure-based methods and structure-based methods. 
Specifically, the non-structure-based methods focus on the feature level and usually limit models over learning on spurious features by designing heuristic learning paradigms or separating different features in high dimensions. 
%invariant causality prediction and feature disentanglement
For example, \citep{IRM} aims to extract nonlinear invariant predictive features across multiple environments. 
%It consolidates multiple training environments and simultaneously minimizes empirical risk and the maximum error across environments. 
%On this basis, 
IIB \citep{iib} performs invariant feature prediction by limiting the mutual information between the learned representation and the ground truth. 
In another way, there are also some works focus on feature disentanglement, which separates the representations of different variables in data \citep{DBLP:journals/pami/BengioCV13,DBLP:conf/iclr/LocatelloBLRGSB19,DBLP:journals/pieee/ScholkopfLBKKGB21}. 
However, existing non-structure-based methods only focus on the training process of data while ignoring the influence of model structures.
%In summary, these non-structure oracles focus more on the training process of data and ignore the influence of model structure. In fact, additional structural attention may further improve the OOD generalization performance of these existing methods. It has been proven that even trained models that utilize false correlations can include subnetworks that capture invariant factor. 

Differently, the structure-based methods aim to investigate the impact of model structures on OOD generalization. Early work can be traced back to \citep{peters2016causal}, which affirms that models with specific structures under linear conditions can avoid false correlations in OOD generalization. 
Although not limited to OOD problems, lottery theory \citep{frankle2018lottery} claims a viewpoint similar to that of \citep{peters2016causal} under nonlinear conditions. 
Most recently, \citep{can} proposes the functional lottery hypothesis, which further confirms the improvement of model structure on OOD generalization performance under OOD setting and nonlinear condition. 
Moreover, this positive impact can be superimposed on most previous non-structure-based methods. 
However, these methods only utilize model compression algorithms while ignoring the relationship between data features and model structures. 

\noindent\textbf{Model Pruning.}
\label{sec: pruning related work}
A series of network pruning methods have been proposed to eliminate unnecessary weights from over-parameterized networks. 
Early research \citep{DBLP:conf/nips/CunDS89} usually tries to remove weight parameters based on the Hessian matrix of the objective function. similarly, \citep{DBLP:conf/nips/HanPTD15} proposes to remove the weights or nodes with small-norm from DNNs. However, these kinds of unstructured pruning (i.e., discrete weights or nodes) can hardly reduce reasoning time without specialized hardware \citep{DBLP:conf/nips/WenWWCL16}. 
Therefore, structured pruning~\citep{DBLP:conf/nips/WenWWCL16,DBLP:conf/iclr/0022KDSG17}, i.e., channels/filters, is more applicable and becomes mainstream. 
For example, \citep{DBLP:conf/ijcai/HeKDFY18} resets less important filters at every epoch while updating all other filters. \citep{DBLP:conf/cvpr/ZhaoNZZZT19} uses stochastic variational inference to remove the channels with smaller mean/variance. 
Despite all that, previous methods essentially follow the traditional empirical risk-guided model pruning paradigm, thus the obtained feature-untargeted sparse model is suboptimal for OOD generalization.
%Briefly, previous approaches on searching for model substructures with better OOD generalization performance essentially follow the traditional empirical risk-guided model pruning paradigm. This feature-untargeted model sparsity makes previous methods only obtain suboptimal OOD models.
% Jie-0812: lack a summary part

%\subsection{SVD in machine learning}
\section{Proposed Method}
%\textcolor{red}{Some notation errors like $\mathcal{S}$ and $\mathbb{E}$ have been fixed.\\ The definition of substructure is added.\\
%Add the illustration for proposition 1, not confident for the description.\\
%Add the ref for supplementary (proof of Eq. 4, 7)}

%In this section, 
% we will give a detailed explanation of the proposed SFP. As shown in Section 3.1, 
We start by formalizing the model structure-based OOD problem in a complete \textit{inner product space} and then provide a theoretical analysis to investigate the impact of ID data and out-domain data on model performance. 
Based on this framework, we elaborate on the optimization objective of SFP and theoretically demonstrate its effectiveness. 
%in the following part,
%we elaborate on the optimization objective of SFP and theoretically demonstrate its effectiveness. 
% The one-shot optimization pipeline is clearly divided into two stages: spurious feature identification and model feature sparsity. 
% Finally, we explain why directed model sparse could find the optimal OOD substructure by theoretically verifying the core of our proposed method: ``\textbf{SFP realizes the model structure sparse corresponding to the spurious feature sparse}''.
% \subsection{Optimization and Motivation}{\label{sec:3.1}}
\subsection{Notations and Preliminaries}\label{sec:3.1}
% In this section, we first describe all the notations needed in our work. Then, We provide an inner-observation of the optimization target of the preliminary OOD generalization methods to reveal the learning process of the models on ID features in these methods. Finally, we demonstrate our main result that previous model-based OOD methods actually only find the sub-optimal OOD substructures.
%***************************
\subsubsection{Linear Parameterized Notations}
Define $X_{id}\in \mathbb{R}^{p \times d}$ and $X_{ood} \in \mathbb{R}^{q \times d}$ as two datasets to present ID data and out-domain data, respectively, where $p$, $q$ denote data numbers, and $d$ is the feature dimension. Thus, the whole training dataset is denoted as $X=X_{id}\cup X_{ood}$, where $X \in \mathbb{R}^{n \times d}$ and $n=p+q$, and $Y$ represents the corresponding ground truth of the feature projections. Note that $p \gg q$ is defined for the problem setting of a large OOD shift. 
Given $p_i$ and $p_o$ as the proportion of ID instances and out-domain instances in the training dataset, we limit $p_i \gg p_o$ and $p_i + p_o = 1$. 
Next, we use $\mathcal{W}\in \mathbb{R}^{m\times d}$ as the parameters of the feature extractor in CNNs, where $m$ is the dimension of the feature map output before the classification layer. 
To rebuild the problem of OOD generalization in a complete inner product space, some auxiliary notations are defined as follows. Let $R = \boldsymbol{C}(\mathcal{W}^\top)$, $S = \boldsymbol{C}(X_{id}^\top)$, and $U = \boldsymbol{C}(X_{ood}^\top)$ 
% jie: rowspace:行空间
% $R = \boldsymbol{C}rowspace(\mathcal{W})$, $S = rowspace(X_{id})$ and $U = rowspace(X_{ood})$ 
be the subspace spanning the parameterized model, ID data, and out-domain data by their rows, respectively. 
% Then, we have $\text{dim}(S) = p >> \text{dim}(U) = q$ under OOD settings. %
Let $E \in \mathbb{R}^{d \times \text{dim}(R)}$, $F \in \mathbb{R}^{d \times \text{dim}(S)}$, and $G \in \mathbb{R}^{d \times \text{dim}(U)}$ as the basis of the orthogonal matrix with standard columns and rows $R$, $S$, and $U$, respectively. 
Then, the original algebraic representation of the model and dataset can be reformulated in linear form as spanning spaces over a set of learnable basis vectors. 
Based on ``the deep multi-layer homogeneity'' suggested in \citep{du2018}, we approximate the training trajectory under OOD settings in a linear form. Thus, 
% layers are automatically and implicitly balanced.
we propose the proposition and analyze the details of the linear transformation as follows:

\begin{proposition}
Model substructures and the feature representations can be effectively corresponded in linear form by the singular value decomposition (SVD) of the feature projections of data into the model space.
\label{proposition1}
\end{proposition}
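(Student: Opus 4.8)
The plan is to realize the feature projection as an explicit linear operator and then read off the claimed correspondence directly from its singular value decomposition. Invoking the deep multi-layer homogeneity of \citep{du2018}, the feature extractor can be treated as the linear map $\mathcal{W}$, so projecting the data into the model space amounts to forming the product $X\mathcal{W}^\top \in \mathbb{R}^{n \times m}$. To make the geometric content transparent, I would instead work with the projection expressed through the orthonormal bases, i.e. the matrix $M = E^\top F \in \mathbb{R}^{\dim(R) \times \dim(S)}$ that records how the ID-data subspace $S$ aligns with the model subspace $R$ (the analogous product with $G$ handles the out-domain subspace $U$). Its entries are precisely the inner products between model-space and feature-space directions, so $M$ is the natural object whose decomposition ought to encode the substructure-to-feature correspondence.

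Next I would apply SVD to $M$, writing $M = P\Sigma Q^\top$ with $P,Q$ orthogonal and $\Sigma = \mathrm{diag}(\sigma_1,\dots,\sigma_r)$ in nonincreasing order, where $r = \min(\dim R,\dim S)$. Transporting the left singular vectors back into $\mathbb{R}^d$ via $E$ yields an orthonormal family of directions in the model space — these are the \emph{model substructures} — while transporting the right singular vectors via $F$ yields the matching orthonormal \emph{feature representations}. The rank-one expansion $M = \sum_{k=1}^{r} \sigma_k\, p_k q_k^\top$ then exhibits the correspondence explicitly: the $k$-th model substructure $E p_k$ is paired with the $k$-th feature representation $F q_k$, and $\sigma_k$ quantifies the strength of that pairing.

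I would then verify that this correspondence is both one-to-one and exhaustive. Because $P$ and $Q$ are orthogonal, the assignment $p_k \leftrightarrow q_k$ is a bijection between orthonormal bases, so no two substructures collapse onto the same feature direction and every direction in either subspace is accounted for up to the common rank $r$. Ordering the components by $\sigma_k$ moreover supplies a canonical ranking of substructures by their alignment with the data features — exactly the ordering that SFP later exploits when it sparsifies the directions of smallest singular value.

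The hard part will be justifying that this purely linear-algebraic SVD faithfully reflects the genuinely nonlinear feature extractor, i.e. that the homogeneity approximation of \citep{du2018} preserves the substructure-feature alignment closely enough for the decomposition to remain meaningful; this is where the informal phrase ``effectively corresponded'' must be pinned down, and it is the only step that is not routine. Once the linearization is granted, the remainder is the standard SVD machinery of orthogonal factorization and rank-one expansion.
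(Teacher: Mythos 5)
Your proposal takes essentially the same route as the paper: the paper likewise invokes the homogeneity result of \citep{du2018} to linearize the extractor, forms the projection matrices $E^\top F$ and $E^\top G$ onto the model-space basis, and identifies left singular vectors with model substructures, right singular vectors with data features, and singular values with pairing strength (exactly the rank-one expansion $\sum_i \sigma_i \xi_i \lambda_i^\top$ it later uses for pruning). Your explicit bijection argument and your flagging of the linearization step as the non-routine gap are, if anything, more careful than the paper's own informal discussion, which simply asserts the correspondence after setting up the same construction.
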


\noindent \textbf{Discussion (Model):}
% \noindent \textbf{Model:} 
Define $E^\top F \in \mathbb{R}^{\text{dim}(R) \times \text{dim}(S)}$ as the basis of $\boldsymbol{C}(X_{id} \mathcal{W}^\top)$ spanning the ID (spurious) feature projections. 
Similarly, $E^\top G \in \mathbb{R}^{\text{dim}(R) \times \text{dim}(U)}$ is the basis of $\boldsymbol{C}(X_{ood} \mathcal{W}^\top)$ spanning the out-domain feature projections. 
Since the column of $E$ span $R$, we have $\mathcal{W} = Er$ for some $r \in \mathbb{
R}^{\text{dim}(R)}$. 
For every ID instance, the feature projection $r_1=E^\top Fa$ is used for some $a \in \boldsymbol{C}(E^\top F)$, where $a$ is a column vector of $\mathbb{R}^{\text{dim}(S)}$. 
Similarly, for every out-domain instance, the feature projection $r_2=E^\top Gb$ is used for some $b \in \boldsymbol{C}(E^\top G)$, where $b$ is a column vector of $\mathbb{R}^{\text{dim}(U)}$. Therefore, the feature projections of the whole training dataset in the model space can be defined as $r=p_ir_1+p_or_2$. 
%Multiply the basis of  model space $E$ with the target vector $r$ to obtain a set of coordinates of output feature maps on the model space. Then, we could donate the set of model parameters $\mathcal{W}$ as $Er$. 
Assume $\mathcal{W}^*$ is the optimal set of model parameters, $\mathcal{W^*}=Er^*$, where $r^*= p_iE^\top Fa^* + p_o E^\top G b*$, and $a^*$, $b^*$ be the true feature projections. 

\noindent \textbf{Discussion (Data):} 
In $S=\boldsymbol{C}(X_{id}^\top)$ with basis $F$ spanning $X_{id}$, 
$\forall~x_i \in X_{id}$, $\exists~z \in \mathbb{R}^{\text{dim}(S)}$, $x = (Fz)^\top$. $X_{id}=(FZ)^\top$, where $Z=\{z\}$. 
Similarly, in $U=\boldsymbol{C}(X_{ood}^\top)$ with basis $G$, $\forall~x_{ood} \in X_{ood}$, $\exists~v \in \mathbb{R}^{\text{dim}(U)}$, $x_{ood} = (Gv)^\top $. $X_{ood}=(GV)^\top$, where $V=\{v\}$. 

\subsubsection{Preliminary Optimization Target.}
\begin{definition}
%Equal model learning on the whole data distribution guided only on risk minimization is defined as undirected learning.
Under OOD setting, applying the same optimization objective to ID data with spurious features and out-domain data without the same spurious features is called undirected learning
\end{definition}
\begin{definition}
Trained independently from scratch for the same number of iterations, the substructure within the original model having the best OOD generalization performance is defined as the OOD lottery~\citep{can}.
\end{definition}

\noindent For the structure-based approach searching the OOD lottery based on undirected learning, the optimization target can be formulated as:
\begin{equation}
    \min~ \mathcal{L}(\mathcal{W}, X, Y) = \mathbb{E}_{X}\left \| X \mathcal{W} -Y \right \|_2^2 + \mathcal{S}\left( \mathcal{W}\right),
    \label{1}
\end{equation}
\iffalse
and,
\begin{equation}
    \mathcal{L}(\mathcal{W}_\infty  X_{id},Y) \approx \mathcal{L}(\mathcal{W}_\infty  X_{ood},Y),
    \label{2}
\end{equation}
\fi
where $\mathcal{L}$ is the task-dependent loss function, and $\mathcal{S}$ is the function that induces the sparsity of the model structure to find the target subnetwork. 
The domain-generalized substructure is described by layer-wise channel saliencies in SFP. To this end, $\mathcal{S}$ is implemented by the squeeze-and-excitation module as suggested in \citep{se}. 
%
%\replaced[]{}{$X \in \mathbb{R}^{n \times d}$ represents the set of training samples, where $n=p+q$ and $Y$ represents the corresponding groudtruth of the feature (sample) projections into the model space.} 
%
The value of relevant parameters in $t$-th iteration is represented by subscript $t$, and the optimal value is represented by superscript $*$. Thus, the task loss in $t$-th iteration can be calculated as:
\begin{equation}
\setlength{\abovedisplayskip}{1pt}
\setlength{\belowdisplayskip}{1pt}
 \begin{aligned}
    \mathcal{L}_t&=\left \| X \mathcal{W}_t -Y \right \|_2^2=\left \| X \mathcal{W}_t -X \mathcal{W}^* \right \|_2^2,
\end{aligned}
\label{2}
\end{equation}
and the gradient is:
\begin{equation}
    \frac{\partial \mathcal{L}_t}{\partial \mathcal{W}_t} 
    = 2\left(\mathcal{W}_t -\mathcal{W}^* \right) X^\top X.
\label{3}
\end{equation}
%Regarding the orthogonal basis of the model space as the dominant left singular vectors provide a consistent orthonormal basis for the training process. 
The orthogonal basis of the model space is regarded as the left singular vectors when performing SVD on the feature projections of data. The right singular vectors correspond to input data features, and the corresponding singular values can be defined as indicators of the importance of the current data features w.r.t. the model structure.
To internally observe the impact of ID and out-domain features on the model, the gradient accumulation is further transformed into a linear form as: 
\begin{equation}
\setlength{\abovedisplayskip}{3pt}
\begin{aligned}
    \frac{\partial \mathcal{L}_t}{\partial \mathcal{W}_t} 
%    = \sum_{t=1}^{\infty }2\left(X \mathcal{W}_t^\top  - X\mathcal{W}^*^\top\right)^\top X\\
%    &=\sum_{t=1}^{\infty }2\left(\mathcal{W}_t -\mathcal{W}^* \right) (E_{X\sim p_i} X^\top X +E_{X\sim p_o} X^\top X)\\
%    &=2\sum_{t=1}^{\infty }(Er_t-Er^*)^\top \left \{p_i (FZ)^2 + p_o (GV)^2\right \}\\
%    &=2\sum_{t=1}^{\infty }\left \{ EE^\top(p_iFa_t^\top +p_oGb_t^\top) - Er^* \right \}^\top \\
%    &~~~~~\times (p_i F Z^2 + p_o G V^2)\\
    %&=2\sum_{t=1}^{\infty } \left\{ (p_i a_t F^\top + p_o b_t G^\top)E E^\top - r^*^\top E^\top\right \}\\
    %&~~~~~\times (p_i F Z^2 + p_o G V^2)\\
    %
%    &=2\sum_{t=1}^{\infty } \left(p_i a_t F^\top + p_o b_t G^\top\right)E E^\top \left(p_i F  Z^2 + p_o G V^2\right)\\
%    &~~~~~ %    - r*^\top E^\top (p_i F  Z^2 + p_o GV^2)\\
    %
%    &=2\sum_{t=1}^{\infty } \left\{p_i^2 a_t \Sigma_{F^\top F} Z^2 + p_o^2 b_t \Sigma_{G^\top G} V^2 \right\}\\
%    &~~~~~+ \left\{p_i p_o a_t \Sigma_{F^\top G} V^2 + p_i p_o b_t \Sigma_{G^\top F} V^2 \right\}\\
%    &~~~~~-  \left\{p_i^2 a^* \Sigma_{F^\top F} X_{id} + p_o^2 b^* \Sigma_{G^\top G} X_{ood} \right\}\\
%    &~~~~~ + \left\{p_i p_o a^* \Sigma_{F^\top G} X_{ood} + p_i p_o b^* \Sigma_{G^\top F} X_{id} \right\}\\
    = 2(p_i^2 (a_t-a^*)\Sigma_{E^\top F,t}^2 X_{id} 
    + p_o^2 (b_t-b^*)\Sigma_{E^\top G,t}^2X_{ood}),
    %&\\~~~+ p_i p_o \left\{(a_t-a^*) \Sigma_{F^\top E} \Sigma_{E^\top G} X_{ood} + (b_t-b^*) \Sigma_{G^\top E}\Sigma_{E^\top F} X_{id}\right\}&,
\end{aligned}
\label{4}
\end{equation}
%where $(a_t-a^*)= a_t- a^*~ \in \boldsymbol{C}(E^\top F)$. And $s \in S$, $u \in U$, $S^*$ represents the optimal feature columns in $S$. Note that $(b_t-b^*)= b_t - b^*~ \in \boldsymbol{C}(E^\top F)$.
where $\Sigma$ denotes the corresponding singular value matrix, and for simplicity, we omit $t$ under $\Sigma$ in the following discussion. 
The proof of Eq.~\ref{4} is provided in \underline{\textit{Appendix A.1}}.

Since $\text{dim}(U)=q \ll \text{dim}(S)=p$, we have $\min~\Sigma_{F^\top G} = \min~\Sigma_{G^\top F} = \sigma_{G^\top F}^q$. Similarly, $\min~\Sigma_{F^\top E} = \min~\Sigma_{E^\top F} = \sigma_{E^\top F}^m$, and $\min~\Sigma_{G^\top E} = \min~\Sigma_{E^\top G} = \sigma_{E^\top G}^m$. 
%For brevity, we use $\sigma_{F^\top G}$ and/or $\sigma_{G^T F}$ to present $\sigma_{G^\top F}^q$.
\iffalse
\begin{equation}
\setlength{\abovedisplayskip}{1pt}
\setlength{\belowdisplayskip}{1pt}
\begin{aligned}
    \sum_{t=1}^{\infty } \frac{\partial \mathcal{L}_t}{\partial \mathcal{W}_t} 
    & \ge 2\sum_{t=1}^{\infty } p_i^2(a_t-a^*) \sigma_{E^\top F}^m^2 X_{id} 
    + p_o^2 (b_t-b^*) \sigma_{E^\top G}^m^2 X_{ood} \\
    &~~~~~+ p_i p_o \sigma^m_{F^\top E}\sigma^m_{E^\top G}( (a_t-a^*) X_{ood} +(b_t-b^*) X_{id} )  \\
    \sum_{t=1}^{\infty } \frac{\partial \mathcal{L}_t}{\partial \mathcal{W}_t} 
    & \le 2\sum_{t=1}^{\infty } p_i^2(a_t-a^*)\sigma_{E^\top F}^{max}^2 X_{id} 
    + p_o^2 (b_t-b^*) \sigma_{E^\top G}^{max}^2 X_{ood} \\
    &~~~~~+ p_i p_o \sigma^m_{F^\top E}\sigma^m_{E^\top G} ((a_t-a^*) X_{ood} +(b_t-b^*) X_{id} ).
\end{aligned}
\label{6}
\end{equation}
\fi
Finally, the model parameters can be calculated as: %Eq.~\ref{5}.
\begin{equation}   
\setlength{\abovedisplayskip}{1pt}
\setlength{\belowdisplayskip}{1pt}
\begin{aligned}
    \min \mathcal{W}^\infty  
    %W_0 -  lr \sum_{t=1}^{\infty } \frac{\partial \mathcal{L}_t}{\partial \mathcal{W}_t} \\
    %
    %
    =&\mathcal{W}_0 - 2lr\sum_{t=1}^{\infty }\sum_{i=1}^{m} p_i^2(a_t-a^*)\sigma_{E^\top F,t, i}^2 X_{id} \\
    &- p_o^2 (b_t-b^*) \sigma_{E^\top G,t,i}^2 X_{ood}.
    %- p_i p_o \sigma^i_{F^\top E}\sigma^i_{E^\top G} ((a_t-a^*) X_{ood} +(b_t-b^*) X_{id}).
%    &W^\infty_{max}=W_0 - 2lr \sum_{t=1}^{\infty } p_i^2(a_t-a^*) \sigma_{E^\top F}^m^2 X_{id} 
%    + p_o^2 (b_t-b^*) \sigma_{E^\top G}^m^2 X_{ood} \\
%    &~~~~~+ p_i p_o \sigma^m_{F^\top E}\sigma^m_{E^\top G}( (a_t-a^*) X_{ood} +(b_t-b^*) X_{id} )  \\
\end{aligned}  
\label{5}
\end{equation}
\subsubsection{Biased Performance on Out-domain and ID Data}
% By reasoning 
Based on the gradient flow trajectories, we compare the learning process and final performance of the model for spurious and invariant features, respectively. 
We observe that the model structure obtained by undirected learning has an obvious performance difference between ID data and out-domain data. 
With this observation, we propose the following propositions. 
\begin{proposition}
Undirected learning (full or sparse training) on biased data distributions can lead to significantly different forward speeds of the model learning along different data feature directions, and the difference has a second-order relationship with the proportion of different data distributions in the training set, i.e.:
\begin{equation}   \setlength{\abovedisplayskip}{1pt}
\setlength{\belowdisplayskip}{1pt}
\begin{aligned}
   \left|\frac{\partial W_t}{\partial (a_t-a^*)} - \frac{\partial W_t}{\partial (b_t-b^*)}\right| \approx 2(p_i^2 \Sigma_{E^\top F}^2 - p_o^2 \Sigma_{E^\top G}^2 ).
\end{aligned}  
\label{6}
\end{equation}
% \begin{equation}   
% \begin{aligned}
% %    &\left |\frac{\partial W_t}{\partial (a_t-a^*)}\right| = 
% %    2p_i^2  \Sigma_{E^\top F}^2 X_{id}^{-1}
% %    +2p_i p_o \Sigma_{F^\top E}\Sigma_{E^\top G} X_{ood}^{-1}\\
% %   &\left |\frac{\partial W_t}{\partial (b_t-b^*)}\right| = 2p_o^2  \Sigma_{E^\top G}^2 X_{ood}^{-1}
% %    +2p_i p_o\Sigma_{F^\top E}\Sigma_{E^\top G} X_{id}^{-1}\\
%    %
%    \left|\frac{\partial W_t}{\partial (a_t-a^*)} - \frac{\partial W_t}{\partial (b_t-b^*)}\right| \approx 2(p_i^2 \Sigma_{E^\top F}^2 - p_o^2 \Sigma_{E^\top G}^2 )
% \end{aligned}  
% \label{6}
% \end{equation}
\label{proposition2}
\end{proposition}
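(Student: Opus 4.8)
The plan is to obtain Eq.~\ref{6} by differentiating the linearized gradient of Eq.~\ref{4} with respect to the two feature-projection errors and reading the resulting coefficients as the per-direction ``forward speeds'' of the parameter trajectory. The starting point is the gradient-descent recursion $\mathcal{W}_{t+1}=\mathcal{W}_t-lr\,\partial\mathcal{L}_t/\partial\mathcal{W}_t$ that generates the trajectory in Eq.~\ref{5}: since the increment of $\mathcal{W}_t$ at each step is proportional to $\partial\mathcal{L}_t/\partial\mathcal{W}_t$, and that gradient splits additively into an ID part linear in $(a_t-a^*)$ and an out-domain part linear in $(b_t-b^*)$, the sensitivity $\partial\mathcal{W}_t/\partial(a_t-a^*)$ (resp.\ $\partial\mathcal{W}_t/\partial(b_t-b^*)$) is exactly the coefficient multiplying $(a_t-a^*)$ (resp.\ $(b_t-b^*)$) in that gradient, up to the common learning-rate prefactor that the proposition's rate-based reading normalizes away.

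First I would differentiate Eq.~\ref{4} term by term. Differentiating with respect to $(a_t-a^*)$ annihilates the out-domain summand and leaves $2p_i^2\Sigma_{E^\top F}^2 X_{id}$; differentiating with respect to $(b_t-b^*)$ leaves $2p_o^2\Sigma_{E^\top G}^2 X_{ood}$. Subtracting and taking the absolute value gives $\big|2(p_i^2\Sigma_{E^\top F}^2 X_{id}-p_o^2\Sigma_{E^\top G}^2 X_{ood})\big|$, which already exhibits the claimed second-order $p_i^2$ versus $p_o^2$ dependence on the distribution proportions. It then remains to discharge the trailing data matrices $X_{id},X_{ood}$, which are folded into the approximation symbol: under the orthonormal-basis construction of $E,F,G$ in Section~\ref{sec:3.1} the data factors reduce to the identity in the induced norm (equivalently $X_{id}^\top X_{id}\approx I$ and $X_{ood}^\top X_{ood}\approx I$), yielding the stated $2(p_i^2\Sigma_{E^\top F}^2-p_o^2\Sigma_{E^\top G}^2)$.

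To make the derivation self-contained I would also trace back to the raw gradient $2(\mathcal{W}_t-\mathcal{W}^*)X^\top X$ of Eq.~\ref{3}, expanding $X^\top X$ as the proportion-weighted covariance $p_iX_{id}^\top X_{id}+p_oX_{ood}^\top X_{ood}$ together with $\mathcal{W}_t-\mathcal{W}^*=E\big(p_iE^\top F(a_t-a^*)+p_oE^\top G(b_t-b^*)\big)$. This expansion produces four blocks scaling as $p_i^2$, $p_o^2$, and two mixed $p_ip_o$ cross-couplings carrying the overlap singular values $\sigma_{F^\top E}\,\sigma_{E^\top G}$ between the ID and out-domain feature subspaces $S$ and $U$. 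Eq.~\ref{4}, and hence Eq.~\ref{6}, retains only the two diagonal blocks; I would justify dropping the $p_ip_o$ cross terms by the near-orthogonality of $S$ and $U$ built in Section~\ref{sec:3.1}, under which these mixed singular values are small enough that the cross-coupling is dominated by the retained contributions.

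The main obstacle I anticipate is exactly this justification of the $\approx$, not any algebra. Because $p_i\gg p_o$, the naive magnitude ordering $p_i^2>p_ip_o>p_o^2$ would make the cross term \emph{larger} than the $p_o^2$ block we keep, so the neglect cannot rest on proportions alone; it must rest on the geometric claim that the overlap $\sigma_{F^\top E}\,\sigma_{E^\top G}$ is small enough to offset the $p_i/p_o$ enhancement. I would therefore make this quantitative, bounding the discarded contribution by $O\!\big(p_ip_o\,\sigma_{F^\top E}\,\sigma_{E^\top G}\big)$ and checking it is $o\!\big(p_o^2\,\sigma_{E^\top G}^2\big)$, i.e.\ that the subspace overlap obeys $\sigma_{F^\top E}=o\!\big((p_o/p_i)\,\sigma_{E^\top G}\big)$, which is precisely the regime the spurious-feature-targeted construction is designed to enforce.
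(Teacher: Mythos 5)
Your derivation follows the paper's own route: Proposition~\ref{proposition2} is obtained there exactly as you propose, by reading off the coefficients of $(a_t-a^*)$ and $(b_t-b^*)$ in the linearized gradient of Eq.~\ref{4} (itself the expansion of Eq.~\ref{3} via $\mathcal{W}_t-\mathcal{W}^*=E\big(p_iE^\top F(a_t-a^*)+p_oE^\top G(b_t-b^*)\big)$, whose details are deferred to Appendix A.1), with the trailing data matrices and the learning-rate prefactor absorbed into the $\approx$. One correction, though, to how you discharge the cross terms: Section~\ref{sec:3.1} does not posit near-orthogonality of $S$ and $U$, and it cannot, because ID and out-domain data share the invariant features --- Proposition~\ref{proposition3} explicitly trades on $\Sigma_{F^\top G}$ being non-negligible, and only the spurious components $F'$ and $G'$ are assumed orthogonal, later, in the proof of Lemma~\ref{lemma2}. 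So your quantitative condition $\sigma_{F^\top E}=o\big((p_o/p_i)\,\sigma_{E^\top G}\big)$ is not available in the paper's regime and should not be the resting point of the argument. Fortunately it is also unnecessary for this particular statement: in the full expansion the $p_ip_o$ block enters the two sensitivities symmetrically, contributing $p_ip_o\,\sigma_{F^\top E}\sigma_{E^\top G}\,X_{ood}$ to $\partial W_t/\partial(a_t-a^*)$ and $p_ip_o\,\sigma_{F^\top E}\sigma_{E^\top G}\,X_{id}$ to $\partial W_t/\partial(b_t-b^*)$, so in the difference of Eq.~\ref{6} the cross contribution is proportional to $X_{ood}-X_{id}$ and vanishes under the very normalization ($X_{id}^\top X_{id}\approx X_{ood}^\top X_{ood}\approx I$) that you already invoke to remove the data factors from the diagonal blocks. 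With that substitution in place of the orthogonality appeal, your argument is complete and coincides with the paper's.
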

\noindent\textbf{Discussion (Update Gradient):} We compute the direction gradients along the directions of the feature projections of ID and out-domain data, respectively. As shown in Eq.~\ref{6}, with $p_i>p_o$, the learning of the basis of the model space is gradually biased towards the directions of spurious features. By performing SVD on the projection of the basis vector of the feature space through the model space, the obtained singular value matrix can be regarded as the fitting degree of the model on the corresponding data distribution at $t_{th}$ iteration.

\begin{proposition}
Undirected learning (full or sparse training) on biased data distributions causes the model to be more biased towards training features with a larger proportion, bringing about significant performance differences in different data distributions, i.e.:
%\mathcal{L}_{ood} &= (W_\infty -W^*)X_{ood}^\top \\
%&=\left\{W_0 - 2lr\lim_{t \to \infty } \sum_{t=1}^{\infty } \frac{\partial \mathcal{L}_t}{\partial \mathcal{W}_t} - W^*\right\} X_{ood}^\top\\
%
%&=\epsilon_{ood}-2lr \sum_{t=1}^{\infty} \left\{  p_i^2(a_t-a^*) \Sigma_{E^\top F}^2 X_{id} + p_o^2 (b_t-b^*) \Sigma_{E^\top G}^2X_{ood} \\
%&~~~~~ \left} -p_i p_o \Sigma_{F^\top E} \Sigma_{E^\top G} (a_t-a^*)  X_{ood} + (b_t-b^*) X_{id}\right\}X_{ood}^T\\
%&=\epsilon
%    +2lr \sum_{t=1}^{\infty} \left\{ p_i^2(a_t-a^*) X_{id} 
%    + p_o^2 (b_t-b^*) X_{ood} 
%    + p_i p_o \Sigma_{F^\top E}\Sigma_{E^\top F}(\widetilde{a_{t}}  %X_{ood} +(b_t-b^*)  X_{id} )\right\} X_{ood}^\top\\
%&\approx \epsilon_{ood}-2lr \sum_{t=1}^{\infty} p_i^2 (a_t-a^*)\Sigma_{E^\top F}^2 T \Sigma_{F^\top G} V^{-1} 
%    + p_o^2 (b_t-b^*) \Sigma_{E^\top G}^2\\
%    &~~~~~-2lr \sum_{t=1}^{\infty} p_i p_o \Sigma_{F^\top E} \Sigma_{E^\top G} \left(\widetilde{a_{t}} 
%    + (b_t-b^*) V \Sigma_{G^\top F}  T^{-1}\right)\\
%\mathcal{L}_{id} &=\left(W_\infty -W^*\right)X_{id}^\top\\
%
%&\approx \epsilon_{id}-2lr \sum_{t=1}^{\infty} p_i^2                 (a_t-a^*)\Sigma_{E^\top F}^2
%    + p_o^2 (b_t-b^*) \Sigma_{E^\top G}^2 V\Sigma_{G^\top F} T^{-1} \\
%    &~~~~~-2lr \sum_{t=1}^{\infty} p_i p_o \Sigma_{F^\top E} \Sigma_{E^\top G} \left(\widetilde{a_{t}} V \Sigma_{G^\top F} T^{-1}
%    +(b_t-b^*)\right)\\
\begin{equation}   
\begin{aligned}
\mathcal{L}_{ood} &-\mathcal{L}_{id}
\approx(p_i^2-p_{o}^2)(1-\Sigma_{F^\top G}) + \epsilon >0,
\end{aligned} 
\label{7}
\end{equation}
where $\epsilon$ is the difference of initial feature projections between ID and out-domain data due to model initialization error. The full proof of Eq.~\ref{7} is provided in \underline{\textit{Appendix A.2}}.
\label{proposition3}
\end{proposition}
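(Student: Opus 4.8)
The plan is to evaluate both domain losses at convergence and subtract them, tracking how the biased gradient flow of Eq.~\ref{5} leaves a much larger residual along the out-domain feature directions than along the ID ones. First I would write $\mathcal{L}_{id}=\|X_{id}(\mathcal{W}^\infty-\mathcal{W}^*)\|_2^2$ and $\mathcal{L}_{ood}=\|X_{ood}(\mathcal{W}^\infty-\mathcal{W}^*)\|_2^2$, and substitute the linear parameterization from the Model discussion, namely $\mathcal{W}^\infty-\mathcal{W}^*=E\big(p_i E^\top F(a^\infty-a^*)+p_o E^\top G(b^\infty-b^*)\big)$, where $a^\infty,b^\infty$ denote the converged feature projections. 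Using $X_{id}=Z^\top F^\top$ and $X_{ood}=V^\top G^\top$ from the Data discussion, both losses reduce to quadratic forms in the residuals $(a^\infty-a^*)$ and $(b^\infty-b^*)$, whose coefficient blocks are built from $F^\top E$, $E^\top F$, $E^\top G$, and the cross block $F^\top G$.

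Next I would control the two residuals via the gradient-flow trajectory. From Eq.~\ref{5} together with Proposition~\ref{proposition2}, the effective convergence rate along the ID directions scales with $p_i^2\Sigma_{E^\top F}^2$ while that along the out-domain directions scales only with $p_o^2\Sigma_{E^\top G}^2$; since $p_i\gg p_o$, the ID residual $(a^\infty-a^*)$ is driven far closer to zero than the out-domain residual $(b^\infty-b^*)$. Feeding this asymmetry into the two quadratic forms, $\mathcal{L}_{id}$ is dominated by the small, well-learned ID residual whereas $\mathcal{L}_{ood}$ retains the large, under-learned out-domain residual, so the leading-order difference carries the coefficient $p_i^2-p_o^2>0$.

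The subspace-overlap factor $1-\Sigma_{F^\top G}$ would then emerge from the cross terms: the model, learned predominantly along the ID subspace $S$, transfers to the out-domain subspace $U$ only through the principal-angle cosines encoded in the singular values of $F^\top G$ (using $\min\Sigma_{F^\top G}=\sigma_{G^\top F}^q$ as the representative value). When $S$ and $U$ align, $\Sigma_{F^\top G}\to 1$ and the ID-fitted component also fits the out-domain data so the gap closes; when they are nearly orthogonal, $\Sigma_{F^\top G}\to 0$ and the gap is maximal. Collapsing the full quadratic expression to this clean form is the step I expect to demand the most care, since it rests on the approximation that the off-diagonal couplings are governed by $\Sigma_{F^\top G}$ and that higher-order residual products are negligible. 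Finally, the projection mismatch already present at $t=0$ from random initialization is absorbed into the additive term $\epsilon$, and positivity follows since $p_i^2-p_o^2>0$ while the principal-angle cosines obey $\Sigma_{F^\top G}\le 1$, so that $1-\Sigma_{F^\top G}\ge 0$.
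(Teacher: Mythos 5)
Your proposal follows the same route as the paper's Appendix~A.2 derivation: substitute the accumulated gradient-flow trajectory of Eq.~\ref{5} into the converged weights, expand the two domain losses through the data representations $X_{id}=(FZ)^\top$ and $X_{ood}=(GV)^\top$ so that the cross-subspace overlap $\Sigma_{F^\top G}$ appears in exactly the terms where ID-learned components are tested on out-domain data (and vice versa), invoke the rate asymmetry of Proposition~\ref{proposition2} to make the ID residual the well-learned one, absorb the initialization mismatch into $\epsilon$, and conclude positivity from $p_i\gg p_o$ and $\Sigma_{F^\top G}\le 1$. The one genuine difference is the form of loss you expand: you take the squared residuals $\left\|X(\mathcal{W}^\infty-\mathcal{W}^*)\right\|_2^2$, which is faithful to the paper's own definition in Eq.~\ref{2}, whereas the paper's proof evaluates the first-order residual projections $(\mathcal{W}_\infty-\mathcal{W}^*)X_{ood}^\top$ and $(\mathcal{W}_\infty-\mathcal{W}^*)X_{id}^\top$ directly. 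This difference matters precisely at the step you flag as delicate: in the paper's linear bookkeeping the ID-learned gradient term picks up a single factor of $\Sigma_{F^\top G}$ under the change of basis when projected onto $X_{ood}$, the $p_ip_o$ cross terms (of the form $\Sigma_{F^\top E}\Sigma_{E^\top G}$) appear symmetrically in both losses and largely cancel in the difference, and $(p_i^2-p_o^2)(1-\Sigma_{F^\top G})+\epsilon$ drops out immediately. In your quadratic route the overlap enters squared and every coefficient is weighted by residual magnitudes such as $\left|a^\infty-a^*\right|^2$, so recovering the stated coefficient $(p_i^2-p_o^2)$ and the linear factor $(1-\Sigma_{F^\top G})$ requires an additional linearization/normalization that the paper never has to make explicit. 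In short: same mechanism, same conclusion; the paper's linear-residual convention (arguably inconsistent with its own Eq.~\ref{2}) is what buys the clean closed form, while your version is more faithful to the declared loss but only reaches Eq.~\ref{7} up to that extra approximation.
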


Taking the risk difference between ID data and out-domain data of the trained model as the measurement of the OOD generalization, the following conclusion is derived:
\begin{corollary}
Undirected learning of networks on highly biased training domains (the dataset consists of a majority data group with spurious features) can only lead to substructures with sub-optimal OOD generalization performance.
\end{corollary}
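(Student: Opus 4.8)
The plan is to read the corollary as a direct, quantitative consequence of Proposition~\ref{proposition3}, reducing the qualitative claim ``sub-optimal OOD generalization'' to the strict positivity and the bias-dependence of the risk gap $\mathcal{L}_{ood}-\mathcal{L}_{id}$. First I would fix the operational measure of OOD generalization used in the sentence immediately preceding the corollary: a substructure generalizes well precisely when its out-domain and in-domain risks are balanced, i.e.\ when $\mathcal{L}_{ood}-\mathcal{L}_{id}$ is close to zero, which is the idealized balanced-invariant target $\mathcal{L}(\mathcal{W}_\infty X_{id},Y)\approx\mathcal{L}(\mathcal{W}_\infty X_{ood},Y)$ underlying the optimization in Eq.~\ref{1}. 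Under this convention the optimum (the OOD lottery) is the substructure, among those trained for the same number of iterations, that minimizes this gap.

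Second, I would invoke Proposition~\ref{proposition3} directly. Because that proposition is stated for undirected learning in \emph{both} its full and sparse forms, every substructure produced by undirected learning inherits the estimate $\mathcal{L}_{ood}-\mathcal{L}_{id}\approx(p_i^2-p_o^2)(1-\Sigma_{F^\top G})+\epsilon>0$. The key step is then to show this lower bound is nontrivial in the highly biased regime: using $p_i+p_o=1$ we have $p_i^2-p_o^2=p_i-p_o$, which is bounded away from zero and approaches $1$ as $p_i\gg p_o$. Moreover, the singular values of $F^\top G$ (the overlap between the orthonormal bases of the ID and out-domain subspaces) satisfy $\Sigma_{F^\top G}\preceq I$, so $1-\Sigma_{F^\top G}\succeq 0$, and it is strictly positive whenever the spurious and invariant subspaces are not perfectly aligned. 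Hence the gap is strictly positive and grows monotonically with the degree of bias $p_i-p_o$.

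Third, to convert ``strictly positive, bias-driven gap'' into ``sub-optimal,'' I would argue by contrast with an achievable smaller gap. Undirected learning couples the loss to the subspace overlap $\Sigma_{F^\top G}$ through the fixed weight $(p_i-p_o)$ and cannot disentangle the two; a directed alternative that attenuates the spurious directions, thereby reducing the effective imbalance weighting, yields a strictly smaller gap, so the undirected gap cannot coincide with the minimal achievable gap attained by the OOD lottery. Therefore any substructure recovered by undirected learning sits strictly above the optimum in risk-gap terms and is sub-optimal by definition.

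I expect the main obstacle to be making the word ``optimal'' rigorous: Proposition~\ref{proposition3} delivers only an \emph{approximate} equality, so I must either upgrade it to a genuine strict lower bound on $\mathcal{L}_{ood}-\mathcal{L}_{id}$ that is uniform over undirected substructures, and also exhibit (or guarantee the existence of) a competing substructure whose gap is provably smaller. The delicate point is controlling the initialization error $\epsilon$ and the ``$\approx$'' slack so that the comparison between the undirected gap and the OOD-lottery gap survives as a strict inequality rather than being absorbed into the approximation; pinning down that $\epsilon$ does not dominate $(p_i-p_o)(1-\Sigma_{F^\top G})$ in the biased regime is where the argument must be most careful.
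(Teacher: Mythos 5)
Your proposal takes essentially the same route as the paper: the paper justifies the corollary exactly by adopting the risk gap $\mathcal{L}_{ood}-\mathcal{L}_{id}$ as the measure of OOD generalization and invoking Proposition~\ref{proposition3} (Eq.~\ref{7}), whose strictly positive gap scales with $p_i^2-p_o^2$ and the feature-overlap term $1-\Sigma_{F^\top G}$, vanishing only when $p_i=p_o$ or the distributions coincide. Your third step --- exhibiting a directed alternative with a provably smaller gap to make ``sub-optimal'' rigorous --- is not part of the paper's corollary argument itself, but it correctly anticipates Lemma~\ref{lemma1}, where the paper shows the spurious-feature-targeted sparse model attains a smaller ID/out-domain deviation (the ratio in Eq.~\ref{12} being at least $1$); your worry about the $\approx$ slack and the initialization error $\epsilon$ is legitimate, and the paper leaves it equally unaddressed.
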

\noindent\textbf{Discussion (Performance Difference):} 
%where $T, Q\in \mathbb{R}^{d\timesd}$ are the standard orthogonal basis of the subspaces of $rowspace(X_{id}^TX_{id})$ and $rowspace(X_{ood}^TX_{ood})$. We Compare $\mathcal{L}_{ood}$ and $\mathcal{L}_{id}$ and state that: Based on that, we state that:
The result intuitively shows that the undirectly learned model performs better on feature distributions with larger instance numbers. 
As shown in Eq.~\ref{7}, the difference in model performance between out-domain data and ID data is linearly related to the proportion of the corresponding instances and the correlation degree between the different feature distributions. 
Moreover, when the out-domain data has the same proportion as ID data in the training dataset (i.e., $p_i=p_o$) or the data distributions of them are consistent, the task loss difference between out-domain and ID data can be reduced to zero. 
%OOD and ID data could be reduced to zero.

%EQ.9

\subsection{SFP: An Spurious Feature-Targeted Model Pruning Method}{\label{sec:3.2}}
% In order to solve the problem of undirected training leading to sub-optimal OOD structure,
To address the problem of sub-optimal OOD substructure caused by undirected training, 
we propose a novel method to effectively remove model branches that are only strongly correlated with spurious features. 
As demonstrated in Figure.~\ref{f1}, the pipeline consists of two stages, including spurious feature identification and model sparse training. 
Specifically, SFP identifies large spurious feature components within ID instances with high probability by observing the loss during training. It then can perform spurious feature-targeted model sparsity by analyzing the SVD of the feature projection matrix between the data and model space. 
We also provide a detailed theoretical analysis of both stages of the proposed SFP in the following part. 
\begin{figure*}
\centering
\setlength{\abovecaptionskip}{2pt}
\includegraphics[width=0.87\textwidth]{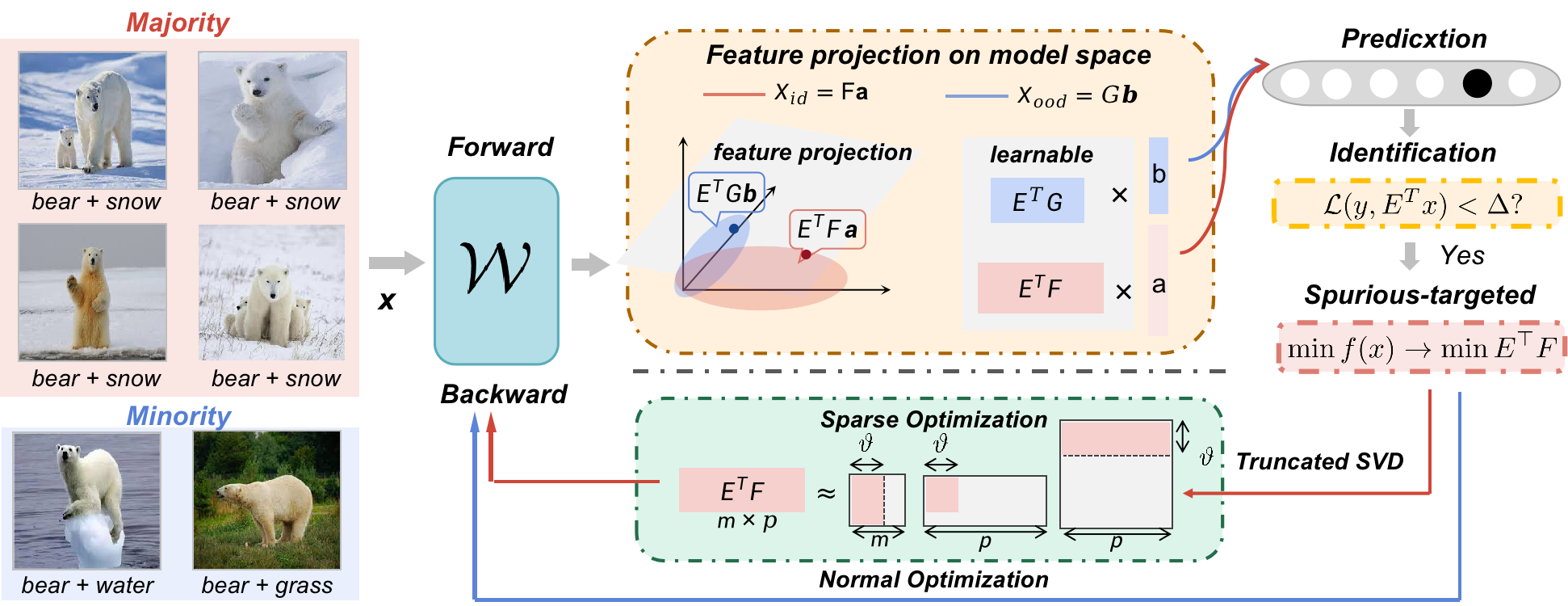}
\caption{The training pipeline of SFP.}
\label{f1}
\end{figure*}

\subsubsection{Spurious Feature Identification}
% \begin{figure}[!ht]
% \centering
% \setlength{\abovecaptionskip}{2pt}
% \includegraphics[width=0.4\textwidth]{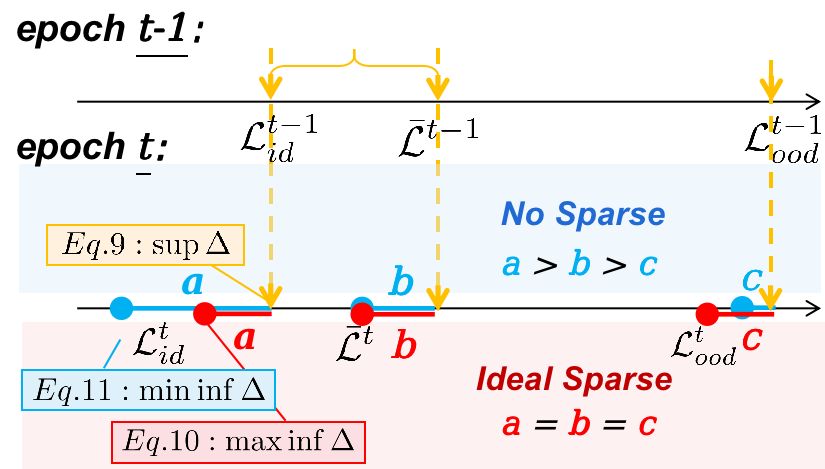}
% \caption{The illustration of the identification standard of the samples dominated by spurious features. At epoch t, if no intervention is applied, the average loss drop on all data (blue b) should be smaller than the loss drop on ID data (blue a) and larger than the drop on OOD data (blue b). The ideal regularization effect is shown by the red line segment, that is, the loss drops uniformly on all data.}
% \label{f2}
% \end{figure}

As shown in Proposition.\ref{proposition3}, 
if no intervention is applied, a model trained on a highly biased data distribution can be gradually biased towards ID data with lower prediction loss. Since the loss difference between ID and out-domain data can be approximately computed by $(p_i^2-p_o^2)(1-\sigma_{F^\top G})$, it is, therefore, can be adopted as the identification criterion for spurious features in each iteration. 
In brief, if the loss corresponding to the current data is lower than a threshold $\Delta$, then the current data is likely to be an ID instance dominated by spurious features. Then we can further prune the spanning sets of model space along the directions of these spurious feature projections. 
%
%determines that the current data is likely to be an ID sample dominated by spurious features, and further prunes the spanning sets of model space along the directions of spurious feature projections. 
%
To compute $\Delta$, we first investigate the average loss in the $t-1$-th iteration as:
\begin{equation}
    \begin{aligned}
    \bar{\mathcal{L}}^{t-1}\approx \mathcal{L}^{t-1}_{id} + p_o(p_i-p_o)(1-\sigma_{F\top G}^{t-1}).
    \end{aligned}
\label{8}
\end{equation}

%\begin{wrapfigure}{r}{6cm}
\begin{wrapfigure}{r}{6cm}
\centering
\includegraphics[width=0.42\textwidth]{}
\caption{Identification of the ID instances dominated by spurious features. At epoch t, if no intervention is applied, the average loss drop on all data (blue b) should be smaller than the loss drop on ID data (blue a) and larger than the drop on out-domain data (blue b). The red line denotes an ideal regularization effect: the loss drops uniformly on all data.}
\label{f2}
\end{wrapfigure}
%\vspace{7mm}

As shown in Figure. ~\ref{f2}, since $\mathcal{L}_{id}^t<\mathcal{L}_{id}^{t-1}$, we have: 
\begin{equation}
    \begin{aligned}
    \sup \mathcal{L}^{t}_{id}=|\bar{\mathcal{L}}^{t-1}-p_o(p_i-p_o)(1-\sigma_{F\top G}^{t-1})|.
    \end{aligned}
\label{9}
\end{equation}
Similar with Eq.~\ref{8}, the lower bound of the loss on ID data at $t$-th iteration can be computed as:
\begin{equation}
    \begin{aligned}
    \inf \mathcal{L}^{t}_{id}
    = |\bar{\mathcal{L}}^{t}-p_o(p_i-p_o)(1-\sigma_{F\top G}^{t})|.
    \end{aligned}
\label{10}
\end{equation}
The spurious feature-targeted regularization forces the model to learn invariant features and achieve fair loss reduction on all instances: $|\mathcal{L}_{id}^t-\mathcal{L}_{id}^{t-1}|=|\bar{\mathcal{L}}^t-\bar{\mathcal{L}}^{t-1}|$. Therefore, the ideal lower bound of the ID loss at $t$-th iteration is:
\begin{equation}
%\begin{tiny}
    \begin{aligned}
    \inf \mathcal{L}^{t}_{id}
    = &|\bar{\mathcal{L}}^{t-1}-p_o(p_i-p_o)(1-\sigma_{F\top G}^{t-1})|  \\
    &-|\bar{\mathcal{L}^t} - \bar{\mathcal{L}^{t-1}}|.
    \end{aligned}
\label{11}
%\end{tiny}
\end{equation}
Thus, $\mathcal{L}_{id}^t$ is highly likely to be located in the range of $\left[\min \inf \mathcal{L}_{id}^{t},\sup \mathcal{L}_{id}^{t}\right]$. The upper bound is used to compute $\Delta$ for identifying instances dominated by spurious features.  
\iffalse
the $\Delta$ without any sparse should be:
\begin{equation}
    \begin{aligned}
    d^t=|\mathcal{L}^{t}_{id}- \overline{\mathcal{L}^{t-1}}|\approx p_o(p_i-p_o)(1-\sigma_{F\top G}^{t-1}).
    \end{aligned}
\label{9}
\end{equation}
However, if model space have been sparse for spurious features at round $t-1$, $\mathcal{L}^{t}_{id}$ would be closer to $\overline{\mathcal{L}^{t-1}}$, and the ideal $\Delta^t_{sparse}$ should be:
\begin{equation}
    \begin{aligned}
    d^t_{sparse} =|\mathcal{L}^{t}_{id}- \overline{\mathcal{L}^{t-1}}| \approx p_o(p_i-p_o)(1-\sigma_{F^\top G}^t ),
    \end{aligned}
\label{10}
\end{equation}
where $\sigma_{F^\top G}^t$ is:
\begin{equation}
    \sigma_{F^\top G}^t=\arg \min_{\Sigma_{F^\top G}}~ \sigma_{F\top G}^t-\sigma_{F^\top G}^{t-1} >0,
\label{11}
\end{equation}
and $\mathcal{L}_{id}^t$ are highly likely to locate in interval  $\left[|\overline{\mathcal{L}^{t-1}}|-d^t,|\overline{\mathcal{L}^{t-1}}|-d^t_{sparse}\right]$. The upper bound is used as the threshold $\Delta$ for spurious feature identification.
Spurious features identification lays the foundation for hindering model from fitting spurious features by trimming the corresponding substructure. 
\fi

\subsubsection{Spurious Feature-Targeted Pruning}
SFP reacts to spurious feature-related instances by weakening their corresponding spurious feature projections into the model space, which can prevent the model from over-fitting on identified spurious features. 
To analyze the projections from data into the model space, we define $\Xi \in \mathbb{R}^{m \times m}$, $\Lambda \in \mathbb{R}^{p \times p}$, and $\Gamma \in \mathbb{R}^{q\times q}$ as the normalized orthogonal basis of $\boldsymbol{C}(E^\top E)$, $\boldsymbol{C}(E^\top F)$, and $\boldsymbol{C}(E^\top G)$, spanning the optimal model projections, the feature projections of ID data into the model space, and the feature projections of out-domain data into the model space, respectively. 
$\xi_i$, $\lambda_i$, and $\gamma_i$ denote the $i$-th column vectors in $\Xi$, $\Lambda$, and $\Gamma$, respectively. 
\begin{lemma}
Spurious feature-targeted model sparsity can effectively reduce the performance deviation of the learned model between ID data and out-domain data: 
\begin{equation}
\begin{aligned}
  % &\left|\frac{\mathcal{R}(X_{ood})-\mathcal{R}(X_{id})}{\mathcal{R}(X_{ood})^{sparse}-\mathcal{R}(X_{id})^{sparse}}\right|   \\
  % &\approx \left|\frac{\sum_{j=1}^{m}  p_{o} \tilde{\sigma_j} \xi_j \gamma_j X_{ood}- \sum_{i=1}^{m} p_{i} \sigma_i \xi_i \lambda_i X_{id}}{\sum_{j=1}^{m}  p_{o} \tilde{\sigma_j} \xi_j \gamma_j X_{ood}-\sum_{i=1}^{\vartheta}p_{i} \sigma_i \xi_i \lambda_i X_{id}}\right| \ge 1,
\left|\frac{\mathcal{R}(X_{ood})-\mathcal{R}(X_{id})}{\mathcal{R}(X_{ood})^{sparse}-\mathcal{R}(X_{id})^{sparse}}\right| \approx \left|\frac{\sum_{j=1}^{m}  p_{o} \tilde{\sigma_j} \xi_j \gamma_j X_{ood}- \sum_{i=1}^{m} p_{i} \sigma_i \xi_i \lambda_i X_{id}}{\sum_{j=1}^{m}  p_{o} \tilde{\sigma_j} \xi_j \gamma_j X_{ood}-\sum_{i=1}^{\vartheta}p_{i} \sigma_i \xi_i \lambda_i X_{id}}\right| \ge 1,
\end{aligned}
\label{12}
\end{equation}
where $\mathcal{R}(\cdot)$ is the empirical risk function. $\sigma_{i}$ and $\tilde{\sigma}_i$ is the $i$-th maximum in $\Sigma_{E^\top F}$ and $\Sigma_{E^\top G}$, and we have $\sigma>0$ since the singular values are non-negative. $m$ and $\vartheta$ is the rank of the singular value matrix after performing compact SVD and truncated SVD on the projections, respectively.  
\label{lemma1}
\end{lemma}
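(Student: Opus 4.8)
The plan is to reduce the stated ratio to a comparison between two real quantities that differ only in the rank of the ID feature projection, and then to exploit the bias established in Proposition~\ref{proposition3}. First I would start from the empirical risk $\mathcal{R}(X)=\mathbb{E}_X\|X\mathcal{W}^\infty-Y\|_2^2$ and substitute the limiting parameters $\mathcal{W}^\infty$ from Eq.~\ref{5}. Because $\mathcal{W}^\infty$ is assembled from the singular directions of the feature-into-model projections, I would diagonalize each term using the orthonormal bases $\Xi$, $\Lambda$, $\Gamma$ of $\boldsymbol{C}(E^\top E)$, $\boldsymbol{C}(E^\top F)$, $\boldsymbol{C}(E^\top G)$. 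Under the compact SVD this expresses the unpruned risks as $\mathcal{R}(X_{id})=\sum_{i=1}^{m}p_i\sigma_i\xi_i\lambda_i X_{id}$ and $\mathcal{R}(X_{ood})=\sum_{j=1}^{m}p_o\tilde{\sigma}_j\xi_j\gamma_j X_{ood}$, which directly yields the numerator of Eq.~\ref{12}.

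Next I would model the pruning operation as replacing the compact SVD of the ID projection by its truncated SVD of rank $\vartheta<m$, discarding the $m-\vartheta$ smallest singular components $\{\sigma_i\}_{i=\vartheta+1}^{m}$. The crucial point, inherited from the spurious-feature identification of Section~\ref{sec:3.2}, is that these discarded directions are exactly the ones aligned with spurious features, whereas the out-domain projection is not targeted and therefore retains all $m$ components. Writing $A=\sum_{j=1}^{m}p_o\tilde{\sigma}_j\xi_j\gamma_j X_{ood}$ for the shared out-domain term, $B=\sum_{i=1}^{m}p_i\sigma_i\xi_i\lambda_i X_{id}$, and $B'=\sum_{i=1}^{\vartheta}p_i\sigma_i\xi_i\lambda_i X_{id}$, the claim collapses to the scalar inequality $|A-B|\ge|A-B'|$, with $B\ge B'\ge 0$ because singular values are non-negative.

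To finish, I would invoke Proposition~\ref{proposition3}: undirected learning on the highly biased distribution ($p_i\gg p_o$) forces the model to over-exploit the spurious ID directions, making the ID contribution dominate the shared out-domain contribution $A$ and placing the analysis in the regime $A\le B'\le B$. There $|A-B|=B-A\ge B'-A=|A-B'|$, which gives the ratio $\ge 1$ and shows that removing the spurious components shrinks the absolute ID--OOD gap. I expect the main obstacle to be pinning down this regime together with the attendant sign convention: one must verify that the SVD expansion of $\mathcal{R}$ is consistent with the positive loss gap $(p_i^2-p_o^2)(1-\Sigma_{F^\top G})$ of Eq.~\ref{7}, and that the truncated directions carry enough spurious mass for $A\le B'$ (equivalently $B+B'\ge 2A$) to hold rather than the opposite case in which truncation would widen the gap. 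Establishing this requires quantitatively tying the smallest singular values of $E^\top F$ to the spurious features identified in Section~\ref{sec:3.2}, after which the remaining steps are routine term-by-term bookkeeping over the SVD components.
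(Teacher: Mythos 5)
Your proposal follows essentially the same route as the paper: represent the projection space before and after sparsification via SVD expansions (the paper's Eqs.~\ref{13} and~\ref{14}, with truncation to rank $\vartheta$ applied only to the ID term while the out-domain term keeps all $m$ components), substitute into the model response $\mathcal{R}(X)=ErX$ (Eq.~\ref{15}), and read the ratio off the resulting expressions. Worth noting: the paper's own proof actually stops at Eq.~\ref{15} and never verifies the final inequality $\ge 1$, so the regime condition you correctly isolate --- $B+B'\ge 2A$, i.e.\ the shared out-domain response not exceeding the average of the full and truncated ID responses, which you ground in Proposition~\ref{proposition3} and in the alignment of the truncated singular directions with spurious features --- is precisely the step the published argument leaves implicit, making your treatment more complete on exactly the point where the paper's proof is weakest.
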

% \noindent 
% And we will illustrate the setting of $\vartheta$ in the next section. \\
%, the targeted sparseness of the model can effectively reduce overfitting on ID features while retaining the learning effect on invariant features.

\begin{proof}[Proof of \textbf{Lemma.\ref{lemma1}}]
% \textbf{Proof Sketch of Lemma.\ref{lemma1}}
As mentioned earlier, the projection space before the model sparsity can be represented as:
    \begin{equation}
\setlength{\abovedisplayskip}{1pt}
\setlength{\belowdisplayskip}{1pt}
        \begin{aligned}
        Er 
        %p_i E E^\top F a^\top + p_o EE^\top G b^\top\\
        %&=p_i \Xi \Sigma_{E^\top F} \Lambda^{-1} + p_o \xi \Sigma_{E^\top G} \Gamma^{-1}\\
        = \sum_{i=1}^{m}\left ( p_i \sigma_i \xi_i \lambda_i ^\top + p_o \tilde{\sigma_i} \xi_i \gamma_i ^\top \right ).
        \end{aligned}
    \label{13}
    \end{equation}
Specifically, SFP first performs SVD on the feature projections which maps input data to a set of coordinates based on the orthogonal basis of model space. 
The matrices of left and right singular vectors correspond to the standard orthogonal basis of the model space and data space, respectively. The matrix of singular values corresponds to the direction weight of the action vectors in the projection matrix. 
SFP prunes the model by trimming the smallest singular values in $\Sigma$ as well as their corresponding left and right singular vectors. 
In this way, SFP can remove the spurious features in ID data space and substructures in the model space simultaneously in a spurious feature-targeted manner along the directions with weaker actions for projection. 
Then, the projection space with only the most important $\vartheta$ singular values can be formalized as:
    \begin{equation}
        \begin{aligned}
        Er^{sparse} = p_i \Xi \Sigma_{E^\top F} \Lambda^{-1} + p_o \xi \Sigma_{E^\top G} \Gamma^{-1} = \sum_{i=1}^{\vartheta} p_i \sigma_i \xi_i \lambda_i ^\top + \sum_{j=1}^{m} p_o \tilde{\sigma_j} \xi_j \gamma_j ^\top. 
        \end{aligned}
    \label{14}
    \end{equation}
Based on the representation of the projection spaces, the model response to data features $\mathcal{R}(X)=ErX$ can be calculated as:
\begin{equation}
        \begin{aligned}
        \mathcal{R}(X) = \left\{ p_i \Xi \Sigma_{E^\top F} \Lambda^{-1} + p_o \xi \Sigma_{E^\top G} \Gamma^{-1}\right\}^\top X^\top = \sum_{i=1}^{m} \left\{p_i \sigma_i \xi_i \lambda_i ^\top X^\top + p_o \tilde{\sigma_i} \xi_i \gamma_i ^\top X^\top\right\}. 
        \end{aligned}
    \label{15}
    \end{equation}
\end{proof}

\subsection{Correspondence between Model Substructure and Spurious Features}{\label{sec:3.3}}
In this section, we theoretically demonstrate that, with a reasonable setting of the sparse penalty for ID data, SFP can effectively reduce the overfitting of the model on spurious features while retaining the learning on invariant features. 
Specifically, for a training instance $x$ that is identified by SFP as spurious feature-dominant ID data, we define $f(x)$ as the last feature maps output by the model and also the projection of $x$ into the model space to be learned defined on the spanning set $E$. 
We use $x\sim F$ to represent $x\in ID$ since $F$ is the basis of the row space spanning ID training instances, and for this reason, we use $x\sim G$ to represent $x\in OOD$. We simply use $E^\top F$ and $E^\top G$ to denote the projections of the input features in the model space.
Thus, the optimization target of SFP can be formulated as: 
\begin{equation}   
\begin{aligned}
\min_{E}\mathcal{L}=\mathcal{L}_{ce}+ \eta \mathbb{E}_{x\sim F}f(x), 
\end{aligned} 
\label{16}
\end{equation}
where $\eta$ is the sparsity factor imposed on the feature projections for the identified ID data, by which, SFP can adjust model structures via adaptively recalibrating the channel-wise feature responses of the ID data at a rate $\eta$.

\begin{lemma}
Define $e= |f^*(x)-f(x)|$ as the difference between true feature maps $f^*(x)$ and $f(x)$. When $\eta < 2e$, SFP can effectively reduce the learning of the model towards spurious features while keeping the performance on the same features.
\label{lemma2}
\end{lemma}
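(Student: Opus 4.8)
The plan is to analyze the per-instance gradient of the SFP objective in Eq.~\ref{16} with respect to the feature projection $f(x)$, and to show that the threshold $\eta < 2e$ is precisely the point at which the sparsity penalty attenuates the spurious component of $f(x)$ without reversing the task-loss-driven descent toward the true map $f^*(x)$. First I would specialize the task loss to the squared form used in Eq.~\ref{2}, writing the contribution of a spurious-feature-dominant ID instance as $\ell(x) = \|f^*(x) - f(x)\|_2^2 + \eta f(x)$, whose gradient with respect to $f(x)$ is $-2(f^*(x) - f(x)) + \eta$. The gradient-descent update then reads $f^{t+1}(x) = f^t(x) + 2\,lr\,(f^*(x) - f^t(x)) - lr\,\eta$, which cleanly separates the invariant-learning term $2\,lr\,(f^*(x)-f^t(x))$ from the sparsity/attenuation term $-lr\,\eta$.

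Next I would decompose $f(x)$ along the invariant and spurious coordinate directions supplied by the SVD ordering of Section~\ref{sec:3.2}, and model $f^*(x)$ as having a vanishing spurious component (the true predictor does not rely on spurious features). Along any spurious direction we then have $f^*_{sp}(x) = 0 < f_{sp}(x)$, so both the task term and the $-lr\,\eta$ term drive $f_{sp}(x)$ downward; this establishes the ``reduce learning toward spurious features'' half of the claim for every nonnegative $\eta$. Along the invariant directions the task term is positive (since the model is still learning, $f^*_{inv}(x) > f_{inv}(x)$), while the sparsity term is negative; here the update preserves progress toward $f^*$ exactly when $2(f^*_{inv}(x) - f_{inv}(x)) > \eta$, i.e.\ when $\eta < 2e$ with $e = |f^*(x) - f(x)|$ the invariant-direction gap. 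This is where the stated threshold enters.

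To make ``keeping the performance on the same features'' precise I would bound the change in $\mathcal{L}_{ce}$ after one SFP step and compare it to the vanilla ($\eta = 0$) step, showing that under $\eta < 2e$ the SFP update remains a descent direction for the task loss on the invariant features while strictly shrinking $\|f_{sp}(x)\|$. Combining the two directional statements then yields the lemma.

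The main obstacle I anticipate is the passage from the scalar threshold $\eta < 2e$ to the genuinely vector-valued feature map: the single quantity $e = |f^*(x) - f(x)|$ conflates the per-direction gaps, so the invariant-preservation argument really requires $\eta < 2\min_{\text{inv}}\bigl(f^*_{inv}(x) - f_{inv}(x)\bigr)$, and justifying that $e$ controls this minimum rather than merely the aggregate norm forces me to invoke the SVD ordering to argue that the invariant directions carry the dominant singular values and hence the largest gaps. Making this identification rigorous, while ensuring the uniform pull $\eta$ does not erode a small-but-genuine invariant component, is the delicate step.
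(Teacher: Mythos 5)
Your invariant-direction analysis lands close to the paper's conclusion, but your treatment of the spurious directions rests on a modeling choice that contradicts the lemma's premise and trivializes half of the claim. You posit $f^*_{sp}(x)=0$ (``the true predictor does not rely on spurious features''), so that the task loss itself drives the spurious component of $f(x)$ downward and the penalty becomes redundant there --- you even note suppression holds ``for every nonnegative $\eta$.'' But the whole point of the setup is that on biased ID instances the spurious correlation \emph{genuinely reduces the training loss}: the target toward which the task gradient pulls the model contains spurious components, so in the paper's proof the update of the spurious-direction components from the task loss alone is $+2ep_i\lambda_{F'}$ (positive), and the penalty only reduces the net rate to $p_i(2e-\eta)\lambda_{F'}$. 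That is precisely why the paper says the de-learning rate of spurious features is positively correlated with $\eta$ and takes the upper bound $\eta=2e$. Under your assumption, ERM alone would unlearn the spurious features and SFP's penalty could only hurt, emptying the lemma of content; the ``reduce learning toward spurious features'' half of the statement cannot be established from your premises because in your model there is nothing for $\eta$ to do.

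The threshold itself also arises differently, and your route misses the population-level bookkeeping that produces it. You impose an \emph{absolute}, per-instance condition: the penalized update must remain a descent direction along invariant coordinates, $2\bigl(f^*_{inv}(x)-f_{inv}(x)\bigr)-\eta>0$. The paper proves a \emph{relative}, aggregated condition: the penalty $\eta\,\mathbb{E}_{x\sim F}f(x)$ hits invariant directions only through identified ID instances (rate $p_i\eta\lambda_{IN}$), while invariant learning is fed by both ID and out-domain data ($2ep_i\lambda_{IN}+2ep_o\gamma_{IN}$), and this penalized rate must still exceed the unpenalized learning rate $2ep_o\gamma_{G'}$ of the out-domain-specific features $G'$ (with $F'$ and $G'$ assumed orthogonal). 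Solving gives $\eta \le \bigl(2ep_i\lambda_{IN}+2ep_o\gamma_{IN}-2ep_o\gamma_{G'}\bigr)/\bigl(p_i\lambda_{IN}\bigr)\approx 2e$; your $\eta<2e$ is the degenerate form in which the two $p_o$ terms cancel, and your per-instance gradient on $f(x)$ never sees the $G'$ comparison or the $p_i,p_o$ weighting at all. Your closing worry about the scalar $e$ conflating per-direction gaps is fair as a criticism --- the paper is equally loose, treating $e$ as a scalar multiplying the direction vectors $\lambda_{IN},\gamma_{IN},\gamma_{G'}$ --- but the repair you sketch (invoking the SVD ordering to argue invariant directions carry the largest gaps) is neither needed by nor present in the paper's argument.
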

\begin{proof}[Proof of \textbf{Lemma.\ref{lemma2}:}]
%\begin{proof} \ of \textbf{Lemma.\ref{lemma2}:} 
The prediction errors of feature projections $L_{f}$ can be defined as:
\begin{equation}
    \begin{aligned}
L_{f} = |f^*(x)-f_(x)|^2 =\sum_{i,j=j_1\cup j_2}(f^*(x)
-\sigma_{i,j_1}\xi_{i}\top\lambda_{j_1} -\sigma_{i,j_2}\xi_{i}\top\gamma_{j_2})^2,
% \frac{\partial L_{ce}}{\partial \sigma_{i,j_1}\xi_{i} } 
% &=\frac{\partial e^2}{\partial \sigma_{i,j}\xi_{i}}=2e\frac{\partial e}{\partial \sigma_{i,j}\xi_{i}}\\
% &=2e \frac{\left|f^*(x)
% -\sigma_{i,j_1}\xi_{i}\top\lambda_{j_1} -\sigma_{i,j_2}\xi_{i}\top\gamma_{j_2})\right |}{\partial \sigma_{i,j}\xi_{i}}\\
% &=-2e\lambda_{j_1}
\end{aligned}
\label{17}
\end{equation}
and the corresponding gradient is:
\begin{equation}
    \begin{aligned}
% L_{ce} &= |f^*(x)-f_(x)|^2\\
% &=\sum_{i,j=j_1\cup j_2}(f^*(x)
% -\sigma_{i,j_1}\xi_{i}\top\lambda_{j_1} -\sigma_{i,j_2}\xi_{i}\top\gamma_{j_2})^2\\
\frac{\partial L_{f}}{\partial \sigma_{i,j_1}\xi_{i} } =\frac{\partial e^2}{\partial \sigma_{i,j}\xi_{i}}=2e\frac{\partial e}{\partial \sigma_{i,j}\xi_{i}} =2e \frac{\left|f^*(x)
-\sigma_{i,j_1}\xi_{i}\top\lambda_{j_1} -\sigma_{i,j_2}\xi_{i}\top\gamma_{j_2})\right |}{\partial \sigma_{i,j}\xi_{i}} =-2e\lambda_{j_1},
\end{aligned}
\label{18}
\end{equation}
where $i$ and $j$ is the index of column vectors in the orthogonal basis for model space and feature space, respectively.
% Similarly, f
For out-domain data, the gradient of the column vectors in the OOD projection matrix interacting with the $j_{th}$ feature vector is $-2e\gamma_{j_2}$. 
Then, splitting the in-domain features into the spurious features $F'$ and the invariant features $IN$, and splitting the out-domain features into the unknown features $G'$ and the invariant features $IN$. 
With a high probability under the OOD setting, we assume $F'$ and $G'$ are orthogonal. 
%Since the environment features in-domain and out-domain are different with high probability under OOD setting, we suppose $F'$ and $G'$ are orthogonal. 
%
To achieve the spurious feature-targeted unlearning and invariant feature-targeted learning of the model, we need to satisfy the following constraint:
\begin{equation}
\begin{aligned}
    2ep_i\lambda_{IN}+2ep_o\gamma_{IN}-p_i\eta \lambda_{IN}>2ep_o\gamma_{G'} 
    \Rightarrow~\eta \le \frac{2ep_i\lambda_{IN}+2ep_o\gamma_{IN}-2ep_o\gamma_{G'}}{p_i \lambda_{IN}}\approx 2e.
\end{aligned}
\label{19}
\end{equation}
\end{proof}
Since the de-learning rate of the spurious feature is positively correlated with $\eta$, the upper bound $\eta=2e$ is taken.
\bigskip
\setlength{\belowdisplayskip}{3pt}

\section{Experiments}{\label{sec:4}}
%In this section, we conduct extensive experiments on various datasets to demonstrate the effectiveness and superiority of the proposed SFP. We first compare the average loss of ID and OOD samples in the training process to prove Lemma.~{SFP} from an experimental perspective. Then, we compare the OOD generalization performance with other state-of-the-art methods.  
\subsection{Experimental Setting}
% In the first experiment, we utilize the Colored-MNIST dataset and set the ratio of 
% \subsubsection{Datasets.}
% In the performance comparison experiment, we follow the experiment setting in \citep{can} and 
We evaluate the proposed SFP on three constructed OOD datasets, including Full-colored-mnist, Colored-object, and Scene-object. 
As shown in Fig.~\ref{fig: ood dataset}, the invariant features are the focused digits or objects in the foreground, and the spurious features are the background scene. 
In particular, in the Full-colored-mnist and Colored-object datasets, we generate ten pure-colored backgrounds with different colors as the spurious features, and in the Scene-object dataset, we extract ten real-world scenes from PLACE365 dataset~\citep{zhou2017places} as the backgrounds. 
Besides, the objects in Colored-object and Scene-object datasets are extracted from MSCOCO dataset~\citep{lin2014microsoft}.
For all datasets, we design the biased data instances with a one-to-one object-scenery relationship, e.g., in the Full-colored-mnist, the biased instance of digit $1$ always has a pure red background, and the unbiased instance has a background with a randomly assigned color. 
The former is considered an ID instance since it contains spurious features, while the latter is considered an out-domain instance.

\begin{figure}[htbp]
\centering
\subfigure[Full-colored-mnist]{
\begin{minipage}[t]{0.3\linewidth}
\centering
\includegraphics[width=1.1in]{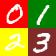}
\end{minipage}%
}%
\subfigure[Colored-object]{
\begin{minipage}[t]{0.3\linewidth}
\centering
\includegraphics[width=1.1in]{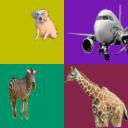}
\end{minipage}%
}%
\subfigure[Scene-object]{
\begin{minipage}[t]{0.3\linewidth}
\centering
\includegraphics[width=1.1in]{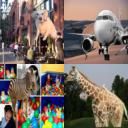}
\end{minipage}
}%
\centering
\caption{Visualization of three constructed OOD datasets.}
\label{fig: ood dataset}
\end{figure}

\begin{table}[!t]
\centering
\fontsize{9}{4.7}\selectfont
\setlength{\tabcolsep}{4.55mm}{
\caption{OOD generalization performance on Full-colored-mnist, Colored-object, and scene-object. `Tr Acc.' / `Te Acc.' denotes the train accuracy and test accuracy, respectively, and `SFP+' denotes the integration with other algorithms.} 
\begin{tabular}{lcccccc}
\toprule
\multirow{2}{*}{{\begin{tabular}[c]{@{}c@{}}Method\end{tabular}}} & \multicolumn{2}{c}{{Full-colored-mnist}}                             & \multicolumn{2}{c}{{Colored-object}} & \multicolumn{2}{c}{{Scene-object}}                                              \\ 
% \cline{2-3}  \cline{4-5} 
\cmidrule(lr){2-3}\cmidrule(lr){4-5}\cmidrule(lr){2-3}\cmidrule(lr){6-7}
 & Tr Acc.  & Te Acc.   & Tr Acc.  & Te Acc. & Tr Acc. & Te Acc. \\ \midrule \midrule
ERM                     & $93.96$        & $62.20$    & $99.99$        & $59.19$      & $100$         & $27.4$       \\
MRM                     & $96.71$        & $80.95$    & $99.99$        & $60.65$        & $100$         & $26.74$ \\
\underline{\textbf{SFP}}                     & $97.48$        & $84.29$     & $100$          & $61.01$      & $100$         & $28.41$   \\ 
\midrule
% \hline
IRM                     & $96.25$        & $77.96$     & $99.99$        & $62.88$     & $99.79$       & $36.88$     \\
MODIRM                  & $98.11$        & $89.32$     & $99.98$        & $64.52$     & $99.78$       & $36.92$      \\  
\underline{\textbf{SFP+}}IRM                  & $98.35$        & $89.93$     & $100$          & $65.8$       & $99.76$       & $38.1$ \\ 
\midrule
% \hline
REX                     & $97.44$        & $87.80$   & $100$          & $64.72$        & $99.76$       & $36.71$    \\
MODREX                  & $98.39$        & $92.19$      & $100$          & $64.52$      & $99.82$       & $36.66$     \\
\underline{\textbf{SFP+}}REX                  & $98.61$        & $93.42$        & $100$          & $66.08$     & $99.68$       & $37.91$   \\ 
\midrule
% \hline
DRO                     & $94.05$        & $62.89$    & $100$          & $66.76$         & $99.98$       & $31.31$    \\
MODDRO                  & $96.73$        & $80.52$    & $100$          & $66.18$     & $99.85$       & $29.38$  \\
\underline{\textbf{SFP+}}DRO                  & $97.56$        & $85.24$    & $100$          & $68.44$      & $99.98$       & $31.78$    \\ 
\midrule
% \hline
UNBIASED               & $93.36$        & $94.04$     & $99.98$        & $75.78$    & $99.85$       & $45.51$ \\ 
% \hline
\bottomrule
\label{table123}
\end{tabular}
}
%\vspace{-1.0em}
\end{table}

\subsection{OOD Generalization}
We compare the OOD generalization performance of our proposed SFP with four state-of-the-art baselines, including three non-structure-based methods: IRM~\citep{arjovsky2019invariant}, REX~\citep{krueger2021out}, DRO~\citep{sagawa2019distributionally}, and one structure-based method: MRM \citep{can}. 
%and other state-of-the-art baselines, which contains three OOD generalization techniques: IRM \citep{arjovsky2019invariant}, REX \citep{krueger2021out}, DRO \citep{sagawa2019distributionally}, and one structure-learning method, MRM \citep{can}. 
% The details of these baselines are given in Sec.~\ref{ood related work} and Sec.~\ref{sec: pruning related work}. 
Since MRM and our proposed SFP are both orthogonal to the other three baselines, we also integrate SFP into them (denoted as `SFP+X'), to compare the performance promotion. 
We train the model using two in-domain environments dominated by biased instances and evaluate the performance in an OOD environment. By such means, the upper bound of the OOD generalization performance can be achieved by training the model in an environment with only unbiased instances. 
In our experiment, we define the biased ratio coefficient to indicate the ratio of biased data in two training environments and one testing environment. Specifically, For Full-colored-mnist and Colored-object datasets, we set the biased ratio coefficient as $(0.8,0.6,0.0)$. To increase the difficulty in Scene-object dataset, we set the biased ratio as $(0.9,0.7,0.0)$. The unbiased performance is tested in an environment with $(0.0,0.0,0.0)$. 
% Supposing a biased ratio coefficient $(0.8,0.6,0.0)$ represents the ratio of biased data in two training environments and one testing environment, we set different biased ratio coefficients for different datasets. 
% For full-colored-mnist and colored-object, we set the biased ratio coefficient as $(0.8,0.6,0.0)$. To increase the difficulty in the scene-object, we set the biased ratio as $(0.9,0.7,0.0)$. The Unbiased performance is tested in environment $(0.0,0.0,0.0)$.

The OOD generalization results are demonstrated in Table~\ref{table123}. 
We can notice that the proposed SFP can effectively improve the OOD generalization performance in all cases. The most significant case is the performance in Full-colored-mnist task cooperates with the REX algorithm, which reaches a high accuracy of $93.42\%$. By contrast, the upper bound accuracy of the unbiased case is $94.04\%$, only surpassing our method by $0.62\%$, 
and even though MRM's promotion is adaptable to other baselines, the benefit of MRM is unstable. 
% As the difficulty among the three datasets increases, the proposed SFP's superiority becomes more significant. 
Moreover, in the Full-colored-mnist task, we can notice that MRM and SFP can assist other OOD generalization algorithms, while SFP's promotion is higher. 
However, in more complex tasks such as Colored-object and Scene-object, MRM sometimes has a negative effect. For example, in the Scene-object task, the test accuracy of the DRO algorithm can achieve $31.31\%$ by itself. 
With the integration of MRM, the performance is dragged down to $29.38\%$, while SFP can help to increase the accuracy to $31.78\%$.

\subsection{Loss Tracking}
\setlength{\belowdisplayskip}{2pt}
% As shown in \textcolor{red}{Lemma.~X}, the expectation of loss on OOD samples is larger than that on ID samples. 
% Need to refer to the theoretical part.
% We further designed the regularization term in the proposed SFP to sparse the feature of low-loss samples. Our methodology is based on \textcolor{red}{Lemma.~X}, so we designed a loss tracking experiment to demonstrate the correctness of \textcolor{red}{Lemma.~X} in an experimental aspect. 
We visualize the changes in loss values on ERM and the proposed SFP to verify the efficiency of our proposed regularization term in SFP. 
%
% We train the deep learning model with the ERM method in this experiment in an ID-dominated environment. 
% In the training process, we track the average loss of ID and OOD samples every epoch and draw the curve in Fig.~\ref{fig: loss curve}.
% \begin{wrapfigure}{r}{6cm}
% \centering
% \includegraphics[width=0.42\textwidth]{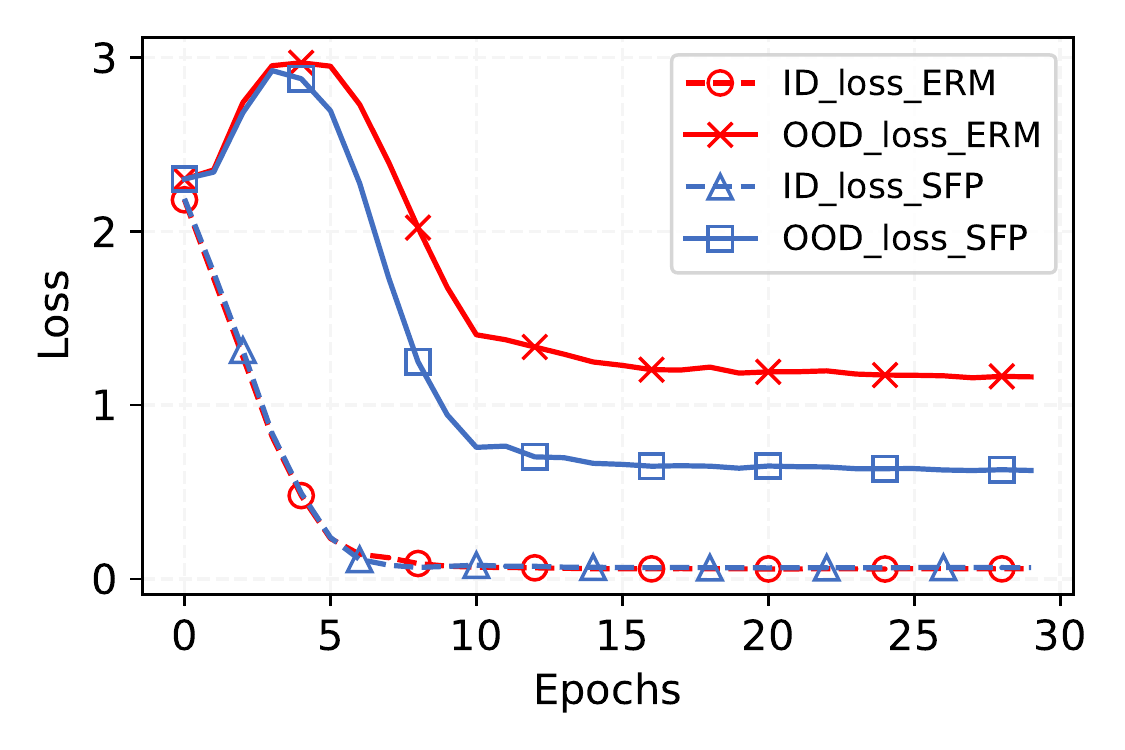}
% \caption{Training loss of ID and OOD samples}
% \label{fig: loss curve}
% \end{wrapfigure}
%\vspace{3mm}
It can be seen from Fig.~\ref{fig: loss curve} that the loss of ID instances is always lower than the loss of out-domain instances in the whole training process, which verifies the result of Proposition.~\ref{proposition3}, i.e., SFP can effectively filter out-domain instances by the task loss.

\begin{wrapfigure}{r}{6cm}
\centering
\includegraphics[width=0.4\textwidth]{}
\caption{Training loss visualization}
\label{fig: loss curve}
\end{wrapfigure}

Besides, We can also notice that the loss of ID instances converges too fast in the ERM algorithm (i.e., the red lines in Fig.~\ref{fig: loss curve}) while the out-domain instances remain large losses, which indicates that the ERM pays too much attention to the biased data, so the model tends to fit the spurious features and ignore the invariant features. 
On the contrary, in the proposed SFP, the distance between ID instances' and out-domain instances' losses is significantly reduced, indicating the effectiveness of spurious feature-targeted pruning. More importantly, the regularization term in our SFP neither slows down the convergence speed nor negatively influences the performance of ID instances.

\section{Conclusion}
In this paper, we propose a novel spurious feature-targeted model pruning framework, dubbed SFP, to automatically explore the optimal model substructure with better OOD generalization. 
By effectively identifying spurious features within ID instances during training, SFP can remove model branches only with strong dependencies on spurious features. Thus, SFP can attenuate the projections of spurious features into the model space and push the model learning toward invariant features. 
%devising the substructure with the optimal OOD generalization. 
We also conduct a detailed theoretical analysis to provide the rationality guarantee and a proof framework for OOD structures via model sparsity. 
Experimental results verified the effectiveness of our method.
%To the best of our knowledge, SFP is the first work to theoretically reveal the correspondence between the biased data features and the model substructures with respect to better OOD generalization.
%%%%%%%%%%%%%%%%%%%%%%%%%%%%%%%%%%%%%%%%%%%%%%%%%%%%%%%%%%%%
%\bibliographystyle{neurips_2023}
\bibliographystyle{plainnat}
\bibliography{name.bib}
\end{document}